\documentclass[runningheads,final,hidelinks]{llncs}
\usepackage[T1]{fontenc}
\usepackage{graphicx}
\usepackage{xcolor}
\usepackage{amsmath, amssymb}
\usepackage{subcaption}
\usepackage{url}
\usepackage{booktabs}
\usepackage[ruled,vlined,linesnumbered]{algorithm2e}
\usepackage{algpseudocode}
\usepackage[
  backend=biber,
  style=numeric-comp,   
  sortcites=true,
  sorting=none,         
  natbib=true,          
  giveninits=true,
  maxcitenames=4,       
  minbibnames=3,        
  maxbibnames=4,        
  doi=false,            
  url=false,
  hyperref=true
]{biblatex}
\DeclareSourcemap{
  \maps[datatype=bibtex]{
    \map{
      \step[fieldset=file,      null]
      \step[fieldset=abstract,  null]
      \step[fieldset=keywords,  null]
      \step[fieldset=urldate,   null]
      \step[fieldset=month,     null]
      \step[fieldset=day,       null]
      \step[fieldset=language,  null]
      \step[fieldset=isbn,      null]
      \step[fieldset=issn,      null]
      \step[fieldsource=doi,
            match=\regexp{^\s*https?://(dx\.)?doi\.org/},
            replace={}]
      \step[fieldsource=booktitle,
            match=\regexp{^\s*Proceedings\s+of\s+(the\s+)?},
            replace={}]
      \step[fieldsource=eventtitle,
            match=\regexp{^\s*Proceedings\s+of\s+(the\s+)?},
            replace={}]
      \step[fieldsource=booktitle,
            match=\regexp{^\s*(19|20)\d{2}\s+},
            replace={}]
      \step[fieldsource=eventtitle,
            match=\regexp{^\s*(19|20)\d{2}\s+},
            replace={}]
      \step[fieldsource=booktitle,
            match=\regexp{^\s*\d+(st|nd|rd|th)\s+},
            replace={}]
      \step[fieldsource=eventtitle,
            match=\regexp{^\s*\d+(st|nd|rd|th)\s+},
            replace={}]
      \step[fieldsource=note,
            match=\regexp{.*arXiv:([0-9]{4}\.[0-9]{4,5})(v[0-9]+)?.*},
            replace={$1},
            fieldtarget=eprint]
      \step[fieldsource=eprint,
            match=\regexp{^[0-9]{4}\.[0-9]{4,5}(v[0-9]+)?$},
            fieldset=eprinttype,
            fieldvalue={arxiv}]
      \step[fieldsource=eprint,
            match=\regexp{^([0-9]{4}\.[0-9]{4,5}(v[0-9]+)?)$},
            replace={https://arxiv.org/abs/$1},
            fieldtarget=url]
    }
  }
}
\renewbibmacro{in:}{}

\newbibmacro*{title:doiurl}[1]{%
  \iffieldundef{doi}
    {\iffieldundef{url}
       {#1}
       {\href{\thefield{url}}{#1}}}
    {\href{https://doi.org/\thefield{doi}}{#1}}%
}

\DeclareFieldFormat{title}{%
  \usebibmacro{title:doiurl}{\mkbibemph{#1}}%
}
\DeclareFieldFormat[article,incollection,techreport,inproceedings,book,misc]{title}{%
  \mkbibquote{\usebibmacro{title:doiurl}{#1}}%
}

\DeclareFieldFormat[book]{title}{\mkbibemph{#1}}

\usepackage{amsmath}
\usepackage{bm}
\usepackage{bbm}
\usepackage[overload]{textcase}  
\usepackage{fontawesome5}  
\usepackage{mathtools}
\newcommand{\state}{{s}}
\newcommand{\ctrl}{{a}}

\newcommand{\ctrlset}{{\mathcal{\MakeTextUppercase{\ctrl}}}}

\DeclareMathOperator*{\expect}{\mathbb{E}}
\newcommand{\E}{\expect}

\newcommand{\s}{~\text{s}}

\usepackage{xcolor}
\definecolor{acqua}{rgb}{0.26,0.58,0.97}
\definecolor{orange}{RGB}{232,119,34}
\definecolor{softwhite}{rgb}{0.9,0.9,0.9}

\makeatletter
\renewcommand{\mathcolor}[2]{%
  \begingroup
  \colorlet{out}{.}\color{#1}#2\@ifnextchar_{\do@mathcolorsub}{\endgroup}%
}
\newcommand{\do@mathcolorsub}[2]{%
  _{{\color{out}#2}}\@ifnextchar^{\do@mathcolorsup}{\endgroup}%
}
\newcommand{\do@mathcolorsup}[2]{%
  ^{\color{out}#2}\endgroup
}
\makeatother

\newcommand{\side}{\leftarrow}
\newcommand{\up}{\uparrow}
\usepackage{tikz}
\usepackage{scalerel} 

\newcommand{\tetrisblock}[1]{%
  \mskip 2mu
  \scalerel*{%
    \begin{tikzpicture}[baseline=(base)]
      \path (0,0) -- (0,3); 
      \coordinate (base) at (0,0.5);
      #1
    \end{tikzpicture}%
  }{X}%
  \mskip 2mu
}

\newcommand{\tetrisL}{\tetrisblock{
    \foreach \c in {(0,0), (0,1), (0,2), (1,0)} {
        \draw[fill=orange!100, line width=0.2pt] \c rectangle ++(1,1);
    }
}}

\newcommand{\tetrisT}{\tetrisblock{
    \foreach \c in {(0,0), (1,0), (2,0), (1,1)} {
        \draw[fill=acqua!100, line width=0.2pt] \c rectangle ++(1,1);
    }
}}

\newcommand{\tetrisOne}{\tetrisblock{
        \draw[fill=gray!25, line width=0.6pt] (0,0) rectangle ++(1,1);
}}

\newcommand{\tetrisTwoStack}{\tetrisblock{
    \foreach \c in {(0,0), (0,1)} {
        \draw[fill=gray!25, line width=0.6pt] \c rectangle ++(1,1);
    }
}}

\newcommand{\tetrisTwoSide}{\tetrisblock{
    \foreach \c in {(0,0), (1,0)} {
        \draw[fill=gray!25, line width=0.6pt] \c rectangle ++(1,1);
    }
}}

\usepackage{ifthen}
\newboolean{include-notes} \setboolean{include-notes}{true}
\newboolean{include-remove}\setboolean{include-remove}{true}
\newboolean{include-new}   \setboolean{include-new}{true}

\newcommand{\jaime}[1]{\ifthenelse{\boolean{include-notes}}{\textcolor{orange}{\textbf{Jaime:} #1}}{}}
\newcommand{\elle}[1]{\ifthenelse{\boolean{include-notes}}{\textcolor{magenta}{\textbf{Elle:} #1}}{}}
\newcommand{\david}[1]{\ifthenelse{\boolean{include-notes}}{\textcolor{Cerulean}{\textbf{David:} #1}}{}}
\newcommand{\remove}[1]{\ifthenelse{\boolean{include-remove}}{\textcolor{red}{\sout{#1}}}{}}
\newcommand{\new}[1]{\ifthenelse{\boolean{include-new}}{\textcolor{blue}{#1}}{#1}}
\newcommand{\todo}[1]{\ifthenelse{\boolean{include-notes}}{\textcolor{blue}{\textbf{TODO:} #1}}{}}

\newcommand{\Pgoal}  
    {\tetrisT}
\newcommand{\Lgoal}
    {\tetrisL}
\newcommand{\PartialUp}
    {\tetrisTwoStack}
\newcommand{\PartialSide}
    {\tetrisTwoSide}
\newcommand{\PartialOne}
    {\tetrisOne}
\usepackage[acronym]{glossaries}
\makeglossaries

\newacronym{ai}{AI}{artificial intelligence}
\newacronym{rlhf}{RLHF}{Reinforcement Learning from Human Feedback}
\newacronym{ppva}{PPVA}{Pragmatic--Pedagogic Value Alignment}
\newacronym{ioc}{IOC}{inverse optimal control}
\newacronym{cirl}{CIRL}{cooperative inverse reinforcement learning}
\newacronym{pomdp}{POMDP}{partially observable Markov decision process}
\newacronym{mpc}{MPC}{model predictive control}
\newacronym{irl}{IRL}{inverse reinforcement learning}
\newacronym{tom}{ToM}{theory of mind}
\newacronym{rl}{RL}{reinforcement learning}
\newacronym{ppbr}{PPBR}{Pragmatic--Pedagogic Best Response}

\usepackage[colorlinks=true]{hyperref}
\usepackage{cleveref}

\begin{document}
\title{
Corrigible Assistance in One Round: Pragmatic--Pedagogic Best Response}
%
%
\author{Elle Lazarski\inst{1} \and Jaime Fernández Fisac\inst{1}}
\authorrunning{E. Lazarski et al.}
%
\institute{Department of Electrical and Computer Engineering\\Princeton University, USA}
\maketitle              
\begin{abstract}

Assistance games formalize human--robot collaboration under asymmetric information: the human knows the goal, while the robot must infer it from observation and interaction in order to assist effectively. In general, computing optimal assistance game strategies online is intractable, since exact solutions require planning in a POMDP. We identify a class of assistance games in which pragmatic--pedagogic reasoning resolves goal uncertainty in a single time step, rendering the full-horizon game exactly solvable by a tractable best-response procedure. Within this class, we show that mainstream inverse optimal control exhibits an inference ceiling that hinders alignment, while pragmatic--pedagogic reasoning overcomes this barrier by immediately disambiguating goals through actions that look equivalent under task execution alone. Finally, we validate our theoretical results and proposed method on a simple collaborative block-building example.


\keywords{Human--Robot Interaction
    \and Value Alignment
    \and Mathematical Modeling and Analysis 
}
\end{abstract}
\glsresetall

\section{Introduction}
\label{sec:introduction}

As robots become more capable and are deployed in increasingly varied contexts, they must infer and adapt to their users' needs online rather than execute prespecified routines. In \gls{ai}, conventional alignment pipelines like \gls{rlhf} optimize for human approval of generated outputs, such as through thumbs-up or A/B preference feedback~\cite{christiano2017deep}. However, this signal is known to be an imperfect, often problematic proxy, since it tends to neglect the downstream effects of individual decisions~\cite{casper2023open,lang2024deception,williams2025targeted}. In human--robot interaction, the coupling between robot operation and human behavior over time makes it all the more necessary to approach alignment with respect to long-term outcomes instead of isolated robot actions~\cite{bestick2017implicitly, liu2016goal, bobu2018learning, bobu2024aligning}.

To capture the temporal dimension of alignment,
\textit{assistance games} \cite{fern2014decision,hadfieldmenell2016cooperative} 
pose the problem as a dynamic two-player collaboration between a human user and a robot assistant, who aims to help realize the human's objective but is \emph{uncertain} about what it is.
%
The resulting equilibrium solutions have been shown to present desirable properties, which extend the notion of 
optimal information seeking to the two-player setting.
In particular,
the human’s actions often carry out a \textit{pedagogic} function, strategically conveying actionable information about the goal,
while the robot's responses are \textit{pragmatic}, interpreting human cues as purposefully (rather than circumstantially) communicative~\cite{fisac2017pragmatic,malik2018efficient},
consistent with modern cognitive science accounts of human teaching and learning~\cite{shafto2014rational}.
Unfortunately, despite their theoretical strengths,
assistance game solutions are largely considered computationally intractable, due to the need to plan in the robot's information space~\cite{laidlaw2025assistancezero}.

In this work, we identify a special---but nontrivial---class of assistance games in which pragmatic--pedagogic solutions are simultaneously \emph{highly effective} and \emph{easily computable}, fully disambiguating the human’s goal in a single time step.
Consequently, for this class of problems, an efficient short-horizon approximation (analogous to single-player QMDP~\cite{littman1995learning}) yields an optimal strategy pair for the full-horizon game.
%
We further show that this equilibrium can be readily obtained
in a single round of player best responses starting from a mainstream \gls{ioc}---or \gls{irl}---solution.
Crucially, while \gls{ioc} often exhibits an \emph{inference ceiling} that impedes one-step alignment,
the pragmatic--pedagogic solution fully overcomes this limitation, uniquely disambiguating goals through actions that appear equivalent from the standpoint of task execution alone.
%
As a result, the pragmatic robot strategy is \mbox{\emph{maximally empowering}}, affording the human the option to convey and achieve any goal regardless of the robot's initial belief,
and rendering the robot's operation \mbox{\emph{corrigible}},
a key requirement for robust alignment and long-term safety~\cite{wiener1960moral}.

Our contributions in this work can be summarized as follows:
\begin{enumerate}
    \item \textbf{One-round convergence to a pragmatic–pedagogic equilibrium.}
    We quantify the \gls{ioc} inference ceiling in terms of a posterior belief bound and show that pragmatic--pedagogic reasoning can surpass it. We formalize the conditions that enable goal disambiguation in one time step and prove that one round of best responses reaches an optimal pragmatic--pedagogic equilibrium for the class of \textit{action-separable} assistance games.

    \item \textbf{A practical methodology for tractable pragmatic assistance.}
    We propose \gls{ppbr}, an algorithm that directly computes the equilibrium solution to an action-separable assistance game, yielding a human-empowering robot strategy.

    \item \textbf{Empirical validation.}
    We validate our theoretical results empirically in a collaborative block-building domain, comparing \gls{ioc} against \gls{ppbr} under a comprehensive range of initial conditions.
\end{enumerate}

\section{Related Work}
\label{sec:prelims_related}

Traditional
\gls{ioc}/\gls{irl} 
aims to recover the objective pursued by a human ``expert'' by observing demonstrations of their behavior, assumed to take place in isolation~\cite{ng2000algorithms,ramachandran2007bayesian,ziebart2008maximum}. 
In interactive settings, however, 
the robot is not a passive observer but an active player whose own actions may affect the human's outcome.
The human may therefore choose actions not only to directly generate utility but also to indirectly improve expected outcomes by influencing the robot.

Assistance games, initially introduced under the name \gls{cirl}, formalize this idea as a cooperative game with asymmetric information \cite{fern2014decision,hadfieldmenell2016cooperative}. In particular, the human knows the true goal, while the robot must infer it from observation and interaction. Solving a \gls{cirl} game can be reduced to solving a \gls{pomdp}, in which the robot's belief over goals serves as a sufficient statistic for optimal decision-making \cite{hadfieldmenell2016cooperative}. However, this reduction inherits the computational burden of \gls{pomdp} planning,
barring its use in practical problems.

Subsequent research in assistance games showed that optimal human--robot strategy pairs must satisfy a specific dynamic programming relation expressed as a fixed point in a \textit{pragmatic--pedagogic Bellman recursion},
whereby the human chooses actions pedagogically to convey strategically relevant information to a suitably attuned robot partner,
and the robot in turn interprets human actions pragmatically, treating them as cues purposefully chosen to convey information~\cite{fisac2017pragmatic}.
Despite an exponential complexity improvement over the naïve POMDP reduction, the resulting belief-space planning is still intractable for runtime computation of assistance strategies.

Here, we build on the theoretical insights established by prior assistance game efforts to investigate runtime-computable robot strategies that preserve cooperative structure while enabling online assistance.
We combine classical short-horizon approximation ideas from decision theory~\cite{littman1995learning} and truncated iterated-best-response approximations from behavioral game theory~\cite{stahl1995players,costa2001cognition,camerer2004cognitive},
and show that they allow us to exactly solve the full-horizon assistance game for a class of problems in which
the robot's uncertainty can be resolved in a single time step.

Finally, some work in the \gls{ai} alignment literature has brought into question the usefulness of pragmatic robot behavior, due to potentially higher sensitivity to modeling assumptions (in particular, whether or not the human user intends to behave pedagogically)~\cite{milli2020literal}. 
Our results shed new light on the matter, suggesting an alternative perspective:
regardless of modeling accuracy,
robot strategies derived from pragmatic--pedagogic solutions make the robot comparatively more responsive to human action cues, increasing the human's effective controllability over outcomes and mitigating known overconfidence issues with non-pragmatic (sometimes called ``literal'') goal-inferring robots (typically \gls{ioc}) \cite{milli2017should,hadfield2017off,bobu2018learning}. 

\section{Problem Formulation: Assistance Games}
\label{sec:problemformulation}

We study assistance games $\mathcal{G}$ in which a human and a robot take turns acting in a shared environment to achieve the human's objective. Critically, this objective is known to the human but, a priori, unknown to the robot. Let
\[
\mathcal{G} = \big\langle \mathcal{S}, \{\mathcal{A}^H, \mathcal{A}^R\}, T_s, \{\Theta, R\}, P_0, \gamma \big\rangle,
\]
where $\mathcal{S}$ denotes the space of world states; $\mathcal{A}^H$, $\mathcal{A}^R$ are the human and robot action spaces; $\Theta$ is a set of possible goal parameters encoding the human's objective; $T_s(s_{t+1} \mid s_t, a_t^H, a_t^R)$ is the transition probability measure; $R(s_t, a_t^H, a_t^R; \theta)$ is the shared reward function, parameterized by $\theta \in \Theta$ (whose value is only observed by the human); $P_0(s_0, \theta)$ is the initial joint distribution over states and goals; and $\gamma \in [0,1]$ is a discount factor. At each time step $t$, the human selects $a_t^H \in \mathcal{A}^H$ first, after which the robot observes $a_t^H$ and selects $a_t^R \in \mathcal{A}^R$; the joint action $(a_t^H,a_t^R)$ then induces a successor state through $T_s$.

The robot maintains a Bayesian posterior belief~$b_t^+\in\Delta(\Theta)$ over candidate goal hypotheses, which is updated after each observed $a^H_t$.
Throughout the paper, we let $b_t$ denote the robot's prior belief at the start of time step $t$, and we use $b_t^+$ for the robot's posterior \emph{after} observing the human's action, specifying the robot's belief update model (e.g., $b_t^{R1+}$) when relevant.

Following the assistance game literature, $P_0$ is known to both players, which means that $b_t^+$ can be computed by \emph{either} player as a sufficient statistic of the robot's information state after observing $(s_0, a^H_0, a^R_0, \dots, s_t, a^H_t)$ \textit{under any particular human policy}. 
Since the robot's belief may inform its behavior, it is \textit{also} strategically relevant to the human.
Therefore, the human's optimal assistance game strategy will in general be a stochastic policy
$\pi^H(a_t^H\mid s_t,b_t; \theta)$,
whereas the robot---not privy to~$\theta$, but observing $a_t^H$ before acting---must choose a policy
$\pi^R(a_t^R\mid s_t,b_t^+, a_t^H)$.
Solving the assistance game amounts to finding a team strategy $\pi := (\pi^H, \pi^R)$ that
maximizes the expected time-discounted return
\begin{equation*}
J(\pi^H, \pi^R)
:=
\expect_{\substack{(s_0,\theta) \sim P_0\\\tau \sim (\pi, T)}}
\left[\sum_{t=0}^{\infty} \gamma^t R(s_t, a_t^H, a_t^R; \theta)\right],
\end{equation*}
where $\tau := (s_0, a^H_0, a^R_0, s_1, a^H_1, a^R_1, \dots)$ denotes the gameplay trajectory, whose distribution is sequentially induced by team strategy $\pi$ and 
$(s_t,b_t)$-transitions $T := (T_s, T_b)$,
with $(s_0, \theta)$ drawn jointly from $P_0$, and $b_0$ defined as the conditional distribution on $\theta$ given $s_0$.
We omit time indices when clear from context.

The dependence of $T_b$ on $\pi^H$ is
a distinctive feature of assistance games:
belief transitions depend not only on the observed human action but, indirectly, on the entire human policy, which determines the observation \emph{likelihood} model~\cite{fisac2017pragmatic}.

\paragraph{Running example:}
We consider a collaborative block-building task in which the human's latent goal $\theta^\star \in \{\Pgoal, \Lgoal\}$ specifies a target Tetris shape.
The human is indifferent to 
where the structure is built and what direction it faces, as long as it is upright.
To avoid known action-multiplicity artifacts in Boltzmann likelihood models~\cite{bobu2020less}, we represent states and actions in the quotient spaces induced by the problem's underlying symmetries: states that differ only by 
an \(\mathrm{SE}(2)\) transform 
(translation and rotation on the ground plane) 
are treated as equivalent, as are actions that induce equivalent state changes.
In our two-goal example, the quotient action space $\ctrlset(\state)$ in most intermediate states of interest ($\state = \PartialOne, \PartialSide,\PartialUp,\dots$) contains two block placements 
(\textsc{side}/$\side$ and \textsc{up}/$\up$).

\paragraph{Suboptimal play:} While other actions are possible in principle (e.g., placing a new block far away from the existing ones), we exclude them from our analysis because they are \emph{exponentially} less likely under all hypotheses.
A robot observing such actions would typically lose situational confidence and choose not to act~\cite{bobu2018learning}.
\looseness=-1

\section{Approach: One Time Step, One Best-Response Round}
\label{sec:approach}

\subsection{Alignment in One Time Step}
Extending the QMDP approximation in single-agent \glspl{pomdp} to the two-player setting, we decompose the assistance game's horizon into a present uncertain phase and a hypothesized later phase in which the robot has gained full certainty about the human's true goal. While this is, in general, an optimistic approximation of the problem (rarely tight for typical POMDPs), we will show in the next section that it is in fact exact in task settings with comparatively mild asymmetry conditions. Intuitively, if both players can independently arrive at an unambiguous correspondence between goals and actions, the robot \textit{will} become fully confident in the human's true goal after a single time step, thereby rendering the one-step goal disambiguation assumption accurate.



\paragraph{Perfect-Information Subgame.}
We can readily compute the oracle value $V^\mathrm{O}(s; \theta)$ that characterizes the fully observed phase of planning. With the robot also privy to the human's true goal, the assistance game reduces to a single-player MDP over the joint action space $\mathcal{A}^H \times \mathcal{A}^R$, where the two players act as a centralized ``hive mind.'' The goal-parameterized state--action value for the team is
\begin{equation}
\label{eq:q_team}
Q^\mathrm{O}(s,a^H,a^R;\theta) := r(s,a^H,a^R;\theta) + \gamma\,\expect\!\big[V^\mathrm{O}(s';\theta)\big],
\end{equation}
with $s'\sim T_s(\cdot\mid s,a^H,a^R)$.

\subsection{Equilibrium in One Round}
We operationalize pragmatic--pedagogic inference from the perspective of a robot helper that reasons about human behavior using a truncated hierarchy of \emph{hypothesized} player models. Specifically, the level-3 robot, denoted R3, simulates H0, R1, and H2 as follows. 


    

\noindent\textbf{\textit{H0: Solipsistic Human Expert.}} Under a given hypothesis $\theta$, H0 acts solely to optimize for that objective without considering how their actions influence the robot’s belief or behavior. Hence, H0 serves as the ``expert demonstrator'' in classical \gls{ioc}/\gls{irl} methods.

Let $V^{H0}(s;\theta)$ represent the maximum expected reward-to-go from state $s$ when the human works alone. This single-player MDP over $\mathcal{A}^H$ yields the solipsistic state--action value $Q^{H0}(s,a^H;\theta) := r(s,a^H;\theta) + \gamma\,\expect\!\big[V^{H0}(s';\theta)\big]$, with $s'\sim T_s(\cdot\mid s,a^H)$. Then
\begin{equation}
\label{eq:pi_h0}
\pi^{H0}(a^H \mid s; \theta) \propto \exp\!\big(\beta^H\,Q^{H0}(s, a^H; \theta)\big),
\end{equation}
where $\beta^{H} > 0$ is an inverse-temperature or ``rationality'' parameter controlling how strongly H0 favors high-value actions under $\theta$:
as $\beta^{H} \to 0$, the policy approaches a uniform distribution over all actions;
as $\beta^{H} \to \infty$, the policy concentrates on optimal actions, converging to a uniform distribution over only the $\arg\max$ set.

\noindent\textbf{\textit{R1: Naïve Robot Learner.}} R1 assumes the human behaves as H0 and updates its belief $b$ by Bayes' rule under likelihood $\pi^{H0}(a^H\mid s;\theta)$, yielding $b^{R1+}(\cdot\mid a^H)$.

Define $Q^{R1}(s, b^{R1+}, a^H, a^R) := \E_{\theta \sim b^{R1+}(\cdot\mid a^H)}\!\big[Q^\mathrm{O}(s,a^H,a^R;\theta)\big]$,
and
\begin{equation}
\label{eq:pi_r1}
\pi^{R1}(a^R \mid s, b^{R1+}, a^H)
\propto 
\exp\!\big(\beta^R\,Q^{R1}(s, b^{R1+}, a^H, a^R)\big),
\end{equation}
where $\beta^R > 0$ is the robot's inverse-temperature parameter.
Note that the robot's action is
informed by its posterior belief after observing the human's action.

\noindent\textbf{\textit{H2: Pedagogic Human User.}} For each candidate human action $a^H$, H2 anticipates R1's belief update and subsequent behavior, evaluating $a^H$ under a given hypothesis $\theta$ by taking an expectation over the induced R1 response distribution. Thus, $a^H$ is valuable to H2 both insofar as it directly advances $\theta$ and insofar as it elicits better robot follow-up under $\theta$. H2's choice of action is therefore \textit{strategic} rather than purely task-directed.

Define $Q^{H2}(s,b,a^H;\theta)
:=\expect_{a^R\sim \pi^{R1}(\cdot\mid s,b^{R1+},a^H)}\!\big[Q^{\text{O}}(s,a^H,a^R;\theta)\big]$,
and
\begin{equation}
\label{eq:pi_h2}
\pi^{H2}(a^H\mid s,b;\theta)
\propto
\exp\!\big(\beta^H\,Q^{H2}(s,b,a^H;\theta)\big).
\end{equation}

\noindent\textbf{\textit{R3: Pragmatic Robot Helper.}} After observing $a^H$, R3 updates its belief $b$ by Bayes' rule using the H2 likelihood model $\pi^{H2}(a^H\mid s,b;\theta)$, yielding $b^{R3+}(\cdot \mid a^H)$. It then selects its response---which is executed in the real environment---by maximizing expected value under the posterior:
\begin{equation}
\label{eq:pi_r3}
a^{R3\star}:=\arg\max_{a^R\in\mathcal A^R(s)}\expect_{\theta\sim b^{R3+}(\cdot \mid a^H)}\!\big[Q^{\text{O}}(s,a^H,a^R;\theta)\big].
\end{equation}

We will later show that, for the problem class we consider, truncating this level-$k$ hierarchy at R3 already reaches an optimal pragmatic--pedagogic equilibrium. Hence, there is no need to iterate further levels (H4, R5, and so on).

\section{Analysis: Pragmatic--Pedagogic One-Step Alignment}

In this section, we establish important properties of the R3--H2 solution and the conditions under which it allows us to recover an optimal equilibrium of the full-horizon assistance game.

\subsection{IOC Inference Ceiling and Incorrigibility}

We begin by examining the fundamental alignment limitations of \gls{ioc}/\gls{irl} in the assistance context. Let
\begin{equation*}
\mathcal{A}^{H0\star}(s;\theta):=\arg\max_{a^H\in\mathcal{A}^H(s)} Q^{H0}(s,a^H;\theta)
\end{equation*}
denote the set of H0-optimal actions under goal $\theta$ in state $s$. In general, an \gls{ioc} inference ceiling appears whenever one goal's H0-optimal action set is a \emph{subset} of the H0-optimal action set for another goal.

\subsubsection{Two Candidate Goals}

We first formalize this pathology in the simplest possible setting with only two candidate goal hypotheses.

\begin{lemma}[Two-goal IOC inference ceiling]
\label{lem:ioc_ceiling}
    Let $\Theta = \{\theta^\star, \theta'\}$, and suppose that $a^\dagger \in \mathcal{A}^H(s)$ is H0-optimal for both goals, with $a^\dagger \in \mathcal{A}^{H0\star}(s;\theta^\star) \subseteq \mathcal{A}^{H0\star}(s;\theta')$.
    Assume further that $|\mathcal{A}^{H0\star}(s;\theta^\star)| = k$ and $|\mathcal{A}^{H0\star}(s;\theta')| = m$, where $1 \le k \le m$.
    Then, R1's posterior belief in $\theta^\star$ after observing $a^\dagger$ from conditions $(s, b)$ converges, as $\beta^H \to \infty$, to
    \begin{equation}\label{eq:ioc_ceiling_two_goal}
    b^{R1+}_\infty(\theta^\star \mid a^\dagger) :=
    \lim_{\beta^H\to\infty} b^{R1+}(\theta^\star\mid a^\dagger) =
    \frac{m\,b(\theta^\star)}
    {m\,b(\theta^\star) + k\,b(\theta')}\,,
    \end{equation}
    which bounds $b^{R1+}_\infty$ strictly below 1.
\end{lemma}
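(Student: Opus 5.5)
The plan is to write R1's posterior explicitly via Bayes' rule with the H0 likelihood~\eqref{eq:pi_h0}, and then take the $\beta^H\to\infty$ limit of each likelihood separately. By definition of R1,
\[
b^{R1+}(\theta^\star\mid a^\dagger)=\frac{\pi^{H0}(a^\dagger\mid s;\theta^\star)\,b(\theta^\star)}{\pi^{H0}(a^\dagger\mid s;\theta^\star)\,b(\theta^\star)+\pi^{H0}(a^\dagger\mid s;\theta')\,b(\theta')}.
\]
We implicitly need $b(\theta^\star)>0$ (otherwise the posterior is identically zero and the bound is trivial), and we assume $\mathcal{A}^H(s)$ is finite, as in the running example.

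The key step is the limit of the Boltzmann likelihood for a fixed goal $\theta$. Let $Q^\star_\theta:=\max_{a}Q^{H0}(s,a;\theta)$. Dividing the numerator and denominator of~\eqref{eq:pi_h0} by $\exp(\beta^H Q^\star_\theta)$ gives
\[
\pi^{H0}(a^\dagger\mid s;\theta)=\frac{e^{\beta^H(Q^{H0}(s,a^\dagger;\theta)-Q^\star_\theta)}}{\sum_{a\in\mathcal{A}^H(s)}e^{\beta^H(Q^{H0}(s,a;\theta)-Q^\star_\theta)}}.
\]
Each term with $a\in\mathcal{A}^{H0\star}(s;\theta)$ equals $1$. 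Each remaining term has a strictly negative exponent and therefore vanishes as $\beta^H\to\infty$. Finiteness of the action set lets us pass the limit through the sum. Because $a^\dagger$ is optimal for both goals, this yields $\pi^{H0}(a^\dagger\mid s;\theta^\star)\to 1/k$ and $\pi^{H0}(a^\dagger\mid s;\theta')\to 1/m$. This is exactly the ``uniform over the $\arg\max$ set'' behavior noted after~\eqref{eq:pi_h0}. The subset hypothesis $\mathcal{A}^{H0\star}(s;\theta^\star)\subseteq\mathcal{A}^{H0\star}(s;\theta')$ is used only to guarantee $a^\dagger$ lies in both sets, and also explains why $k\le m$.

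Next, substitute the two limits. The posterior is a continuous function of the two likelihoods wherever its denominator is positive, and the limiting denominator $b(\theta^\star)/k+b(\theta')/m$ is positive. So
\[
b^{R1+}_\infty(\theta^\star\mid a^\dagger)=\frac{b(\theta^\star)/k}{b(\theta^\star)/k+b(\theta')/m}.
\]
Multiplying through by $km$ gives~\eqref{eq:ioc_ceiling_two_goal}.

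The strict bound below $1$ holds if and only if $k\,b(\theta')>0$. Since $k\ge1$, this reduces to $b(\theta')>0$. I would state this as the implicit nondegeneracy assumption, since a prior already certain of $\theta^\star$ has nothing to disambiguate.

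The only delicate point is justifying the interchange of limit and normalization. That is immediate once the action set is finite, which is how I would handle it. Beyond that the argument is routine, and I expect no real obstacle.
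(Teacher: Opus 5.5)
Your proposal is correct and follows the paper's proof: you take the limits $\pi^{H0}(a^\dagger\mid s;\theta^\star)\to 1/k$ and $\pi^{H0}(a^\dagger\mid s;\theta')\to 1/m$ as $\beta^H\to\infty$, then substitute them into R1's Bayes update and normalize. The details you add are sound refinements that the paper leaves implicit: finiteness of the action set, positivity of the limiting denominator, and the fact that the strict bound below $1$ requires $b(\theta')>0$.
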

\begin{proof}
As $\beta^H\to\infty$, $\pi^{H0}(a^\dagger\mid s;\theta^\star)\to \frac{1}{k}$ and $\pi^{H0}(a^\dagger\mid s;\theta')\to \frac{1}{m}$. Substituting these limiting likelihoods into the R1 Bayes update and normalizing gives~\eqref{eq:ioc_ceiling_two_goal}.
\qed
\end{proof}
\begin{remark}\label{rem:no_new_info}
If the two goals assign equal H0 likelihood to $a^\dagger$, R1 learns no new information to disambiguate between $\theta^\star$ and $\theta'$. In particular, when the optimal action sets are identical, \eqref{eq:ioc_ceiling_two_goal} leaves the prior unaltered, yielding $\lim_{\beta^H\to\infty} b^{R1+}(\theta^\star\mid a^\dagger) = b(\theta^\star)$.
\end{remark}

\paragraph{Running example:} The human's first action---placing the anchor block---is necessarily uninformative to the robot. Since by assumption the human is indifferent to position and north--south--east--west orientation, all feasible initial block placements belong to the same equivalence class, and placing \textit{a} block is the unique optimal action for any $\theta\in\{\Pgoal, \Lgoal\}$ (cf.~\Cref{rem:no_new_info}).

The robot can select any follow-up action $a^R\in\{\side,\up\}$ arbitrarily regardless of its prior belief, since both have the same expected value under the two goal hypotheses (both goals include a block above and adjacent to the anchor block). For our subsequent analysis, we assume the robot places \textit{above} the anchor, i.e., $a^R=\up$. The alternate case is symmetrical in structure. We fix the resulting partial state $s=\PartialUp$ as the reference state for the remainder of the paper.

From \Cref{lem:ioc_ceiling}, with $k=1$ (since $\side$ is uniquely H0-optimal under $\Pgoal$) and $m=2$ (since $\side$ and $\up$ are \textit{both} H0-optimal under $\Lgoal$), the \gls{ioc} ceiling is
\begin{equation}
\label{eq:ioc_ceiling_tetris}
b^{R1+}_\infty(\Pgoal \mid \side) = \frac{2b(\Pgoal)}{1+b(\Pgoal)}.
\end{equation}
When $b(\Pgoal)<\frac{1}{3}$, this ceiling lies below the robot’s one-step decision boundary (which is $b=\frac{1}{2}$). Importantly, no amount of assumed human rationality ($\beta^H \gg 1$) can push the one-step \gls{ioc} posterior beyond $\frac{1}{2}$. This means that, if the human's true goal is $\theta^\star = \Pgoal$ and the robot starts off placing more than $\frac{2}{3}$ belief on $\Lgoal$, there is no course of action available to the human to prevent the robot from building the wrong structure.

Specifically, if the human places a block on top of the partial structure ($a^H = \up$), the \gls{ioc} robot's optimal response will place a block on the side ($a^R = \side$); and if the human places a block on the side ($a^H = \side$), the \gls{ioc} robot will confidently follow up with a final block on top ($a^R = \up$); either scenario results in completing the $\Lgoal$ structure. In other words, even in simple assistance games, the \gls{ioc} robot strategy is short-term \emph{incorrigible} from seemingly benign initial conditions.

\subsubsection{Multi-Goal Assistance Games}

We now express the inference ceiling in complete generality.

\begin{lemma}[General IOC inference ceiling]
\label{lem:ioc_ceiling_multi}
Let $a^\dagger \in \mathcal{A}^H(s)$, and assume that there exists some candidate goal $\bar\theta\in\Theta$ such that $a^\dagger\in
\mathcal{A}^{H0\star}(s;\bar\theta)$. Then, for every $\theta\in\Theta$, R1's posterior belief after observing
$a^\dagger$ from conditions $(s, b)$ converges, as $\beta^H\to\infty$, to
\begin{equation}
\label{eq:ioc_ceiling_multi}
b^{R1+}_\infty(\theta \mid a^\dagger)
:= \lim_{\beta^H \to \infty} b^{R1+}(\theta \mid a^\dagger)
= \frac{\dfrac{b(\theta)}{|\mathcal{A}^{H0\star}(s;\theta)|}\,\mathbbm{1}\!\big[a^\dagger\in\mathcal{A}^{H0\star}(s;\theta)\big]}{\displaystyle\sum_{\theta'\in\Theta} \dfrac{b(\theta')}{|\mathcal{A}^{H0\star}(s;\theta')|}\,\mathbbm{1}\!\big[a^\dagger\in\mathcal{A}^{H0\star}(s;\theta')\big]}.
\end{equation}
In particular, $b^{R1+}_\infty < 1$ whenever $a^\dagger$ is H0-optimal under $\theta$ and at least one alternative $\theta' \neq \theta$.
\end{lemma}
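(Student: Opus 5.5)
The plan is to mirror the proof of \Cref{lem:ioc_ceiling}: first compute the limiting H0 likelihoods of $a^\dagger$ under every hypothesis, then pass the limit through R1's Bayes update. The only extra care needed is to check that the denominator of the update stays bounded away from zero. The hypothesis that $a^\dagger$ is H0-optimal under some $\bar\theta$ is exactly what guarantees this.

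\textbf{Step 1: limiting likelihoods.} Fix $\theta$ and write $Q^\star(\theta):=\max_{a}Q^{H0}(s,a;\theta)$. Dividing the numerator and denominator of \eqref{eq:pi_h0} by $\exp(\beta^H Q^\star(\theta))$ gives
\[
\pi^{H0}(a^\dagger\mid s;\theta)=\frac{\exp\!\big(\beta^H[Q^{H0}(s,a^\dagger;\theta)-Q^\star(\theta)]\big)}{\sum_{a}\exp\!\big(\beta^H[Q^{H0}(s,a;\theta)-Q^\star(\theta)]\big)}.
\]
Every exponent is $\le 0$, and it equals $0$ exactly on $\mathcal{A}^{H0\star}(s;\theta)$. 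As $\beta^H\to\infty$, the denominator tends to $|\mathcal{A}^{H0\star}(s;\theta)|$ and the numerator tends to $\mathbbm{1}[a^\dagger\in\mathcal{A}^{H0\star}(s;\theta)]$. Hence
\[
\pi^{H0}(a^\dagger\mid s;\theta)\to \frac{\mathbbm{1}[a^\dagger\in\mathcal{A}^{H0\star}(s;\theta)]}{|\mathcal{A}^{H0\star}(s;\theta)|}.
\]
This step needs $\mathcal{A}^H(s)$ to be finite, which I will assume as in the paper's quotient-action setting. Finiteness also ensures that the maximum is attained and the argmax set is nonempty.

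\textbf{Step 2: the Bayes update and the limit.} R1's posterior is
\[
b^{R1+}(\theta\mid a^\dagger)=\frac{b(\theta)\,\pi^{H0}(a^\dagger\mid s;\theta)}{\sum_{\theta'}b(\theta')\,\pi^{H0}(a^\dagger\mid s;\theta')}.
\]
To take the limit of the quotient, the limiting denominator must be positive. By hypothesis $a^\dagger\in\mathcal{A}^{H0\star}(s;\bar\theta)$, so the $\bar\theta$ term of the denominator tends to $b(\bar\theta)/|\mathcal{A}^{H0\star}(s;\bar\theta)|$.

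This is positive provided $b(\bar\theta)>0$. This is the main subtle point. I would handle it by either assuming full-support priors or by reading the hypothesis as requiring some $\bar\theta$ in the support of $b$. Without this, the denominator vanishes and the limit is genuinely ill-defined, since it would then depend on the suboptimality gaps.

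When $\Theta$ is finite, the limit passes through the finite sum, and continuity of the quotient yields \eqref{eq:ioc_ceiling_multi}. For infinite $\Theta$, one would instead need dominated convergence. The likelihoods are bounded by $1$, so this is routine, though the sum then becomes an integral; I would restrict the statement to finite $\Theta$, consistent with the summation notation.

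\textbf{Step 3: the ceiling.} Suppose $a^\dagger$ is H0-optimal under $\theta$ and under some $\theta'\neq\theta$, with $b(\theta')>0$. Then the denominator of \eqref{eq:ioc_ceiling_multi} contains the strictly positive $\theta'$ term in addition to the numerator's own term. This gives $b^{R1+}_\infty(\theta\mid a^\dagger)<1$.

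As a consistency check, specializing to $\Theta=\{\theta^\star,\theta'\}$ with $k$ and $m$ as in \Cref{lem:ioc_ceiling} recovers \eqref{eq:ioc_ceiling_two_goal}.
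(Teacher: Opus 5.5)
Your proof is correct and follows the same route as the paper: take the softmax limit to the uniform distribution on the H0-argmax set, then pass that limit through R1's Bayes update and normalize. Your extra care adds conditions the paper leaves implicit, namely finite $\mathcal{A}^H(s)$ and $\Theta$, $b(\bar\theta)>0$ so the limiting denominator does not vanish, and $b(\theta')>0$ for the strict ceiling $b^{R1+}_\infty<1$.
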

\begin{proof}
As $\beta^H\to\infty$, the H0 softmax policy converges to the uniform distribution over the $\arg\max$ action set:
\[
\pi^{H0}(a^\dagger\mid s;\theta)
\;\to\;
\frac{1}{|\mathcal{A}^{H0\star}(s;\theta)|}\,
\mathbbm{1}\!\big[a^\dagger\in\mathcal{A}^{H0\star}(s;\theta)\big].
\]
Substituting into the R1 Bayes update and normalizing gives~\eqref{eq:ioc_ceiling_multi}.
\qed
\end{proof}

\subsection{Pedagogic Leverage and Corrigibility}

We next show that pragmatic--pedagogic inference can break the belief ceiling experienced by the \gls{ioc} robot R1. Since H2 evaluates actions by anticipating R1’s response, actions that are equivalent from H0's solipsistic perspective need not remain equivalent under H2’s reasoning.


\begin{definition}[Level-$k$ advantage]
\label{def:ped_advantage}
Let $Hk$ be a human decision model with state--action value
$Q^{Hk}(s,b,a^H;\theta)$. For any two human actions $a, \tilde a \in\mathcal A^H(s)$, the level-$k$ advantage of $a$ against $\tilde a$ under goal $\theta$ from conditions $(s,b)$ is the difference in their expected value:
\begin{equation*}
\label{eq:advantage}
A^{Hk}_\theta(a,\tilde a;s,b) := Q^{Hk}(s,b,a;\theta) - Q^{Hk}(s,b,\tilde a;\theta).
\end{equation*}
\end{definition}

The level-$k$ advantage measures how strongly H$k$ prefers $a$ relative to $\tilde a$ under a given hypothesis $\theta$. Note that \Cref{def:ped_advantage} can be applied to $Q^{H0}$ by simply ignoring $b$, since the solipsistic state--action value does not depend on the robot's belief. When $k=2$, we refer to this quantity as the \textit{pedagogic advantage}.

We emphasize that, in the Boltzmann-rational setting, $Q^{H2}$ is a function of both players' rationality parameters: $\beta^H$ influences R1's belief update after observing $a^H$, and $\beta^R$ determines its subsequent response probabilities. This is in contrast with $Q^{H0}$, which depends on neither $\beta^H$ nor $\beta^R$. Throughout the paper, we compute $Q^{H2}$ in the rational human limit ($\beta^H \to \infty$).

Focusing on H2, we now characterize when human actions serve as unambiguous goal-identifying signals. In particular, every action that is H2-optimal under goal $\theta$ must be strictly suboptimal under all $\theta'\neq\theta$. Equivalently, the H2-optimal action sets for distinct goals must be \textit{disjoint}. Checking this condition for each candidate goal yields the pedagogic leverage map, defined as follows.

\begin{definition}[Pedagogic leverage map]
\label{def:ped_leverage}
The pedagogic leverage map from conditions $(s,b)$ is the set-valued map
\begin{equation*}
\mathcal L_{s,b}^{H2}: \, \Theta\rightrightarrows\mathcal A^H(s)
\end{equation*}
defined, for each $\theta\in\Theta$, by
\begin{equation*}
\mathcal L_{s,b}^{H2}(\theta)
:=
\begin{cases}
\mathcal A^{H2\star}(s;\theta),
&
\text{if }
\mathcal A^{H2\star}(s;\theta)
\cap
\mathcal A^{H2\star}(s;\theta')
=
\emptyset
\quad
\forall\,\theta'\neq\theta,
\\[0.5em]
\emptyset,
&
\text{otherwise},
\end{cases}
\end{equation*}
where $\mathcal A^{H2\star}(s;\theta)
:=
\arg\max_{a^H\in\mathcal A^H(s)}
Q^{H2}(s,b,a^H;\theta)$.
\end{definition}

We now show that whenever an observed action has pedagogic leverage toward a particular goal, R3's posterior concentrates on that goal as $\beta^H \to \infty$.

\begin{proposition}[Pedagogic leverage enables full one-step alignment]
\label{prop:leverage_full_alignment}
Fix any $\beta^R>0$, and suppose that action $a^H\in\mathcal L_{s,b}^{H2}(\theta)$. Then,
\begin{equation}
\label{eq:prop1}
b^{R3+}_\infty(\theta \mid a^H) := \lim_{\beta^H\to\infty} b^{R3+}(\theta \mid a^H)=1.
\end{equation}
\end{proposition}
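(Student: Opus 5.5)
The argument mirrors the proof of \Cref{lem:ioc_ceiling_multi}, with the H2 likelihood in place of the H0 likelihood. The key point is that the limiting H2 policy is again uniform over an $\arg\max$ set. Membership $a^H\in\mathcal L^{H2}_{s,b}(\theta)$ then forces this limiting likelihood to be positive under $\theta$ and zero under every alternative.

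\textbf{Setup.} Throughout, $Q^{H2}$ is evaluated in the rational-human limit, as stated before \Cref{def:ped_leverage}. So I treat $Q^{H2}(s,b,\cdot\,;\theta)$ as a fixed function of $a^H$ for each $\theta$. It depends on $\beta^R$, which is held fixed, and it is built from the limiting R1 posterior $b^{R1+}_\infty$ of \Cref{lem:ioc_ceiling_multi}. The sets $\mathcal A^{H2\star}(s;\theta)$ are therefore well defined and independent of the $\beta^H$ in the outer softmax~\eqref{eq:pi_h2}.

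\textbf{Limit of the H2 likelihood.} The action set $\mathcal A^H(s)$ is finite, so the softmax~\eqref{eq:pi_h2} converges as $\beta^H\to\infty$:
\[
\pi^{H2}(a^H\mid s,b;\theta)\;\to\;\frac{\mathbbm 1\big[a^H\in\mathcal A^{H2\star}(s;\theta)\big]}{|\mathcal A^{H2\star}(s;\theta)|}.
\]
For a suboptimal action the ratio to an optimal one decays like $e^{-\beta^H\Delta}$, where $\Delta>0$ is the minimal gap in $Q^{H2}$.

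\textbf{Using leverage.} Since $a^H\in\mathcal L^{H2}_{s,b}(\theta)$, the leverage set is nonempty. By \Cref{def:ped_leverage}, this gives $a^H\in\mathcal A^{H2\star}(s;\theta)$ and $a^H\notin\mathcal A^{H2\star}(s;\theta')$ for every $\theta'\neq\theta$. The limiting likelihood is therefore $1/|\mathcal A^{H2\star}(s;\theta)|>0$ under $\theta$ and $0$ under every other goal.

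\textbf{Bayes update.} R3's posterior is
\[
b^{R3+}(\theta\mid a^H)=\frac{b(\theta)\,\pi^{H2}(a^H\mid s,b;\theta)}{\sum_{\theta''}b(\theta'')\,\pi^{H2}(a^H\mid s,b;\theta'')}.
\]
Taking limits term by term, the numerator tends to $b(\theta)/|\mathcal A^{H2\star}(s;\theta)|$. The denominator tends to the same quantity, because every other summand vanishes. If $\Theta$ is infinite, dominated convergence handles the sum, since each summand is bounded by $b(\theta'')$. Hence the ratio tends to $1$, which is~\eqref{eq:prop1}.

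\textbf{Main obstacle.} The delicate step is the requirement $b(\theta)>0$. If $b(\theta)=0$, the limit is $0/0$ and the posterior stays at $0$. I would either add $b(\theta)>0$ as an assumption, which is implicit since the update is conditioned on $(s,b)$, or note that it is needed for the statement to hold.

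A second subtlety is the double role of $\beta^H$: it appears inside $Q^{H2}$ through R1's update and also in the outer softmax. If both are sent to infinity jointly rather than with $Q^{H2}$ already at its limit, I would argue as follows. $Q^{H2}$ converges as $\beta^H\to\infty$, because $b^{R1+}\to b^{R1+}_\infty$ and $\pi^{R1}$ is continuous in the belief. So for large $\beta^H$ the strict gaps between H2-optimal and H2-suboptimal actions persist, bounded below by $\Delta/2$, and the exponential decay argument still goes through.
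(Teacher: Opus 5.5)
Your proof is correct and follows the paper's own argument. Both take the $\beta^H\to\infty$ limit of the H2 softmax likelihood, which is uniform over $\mathcal A^{H2\star}$, then use the disjointness built into $\mathcal L^{H2}_{s,b}$ to get a positive limit under $\theta$ and zero under every $\theta'\neq\theta$, and substitute into R3's Bayes update. Your caveat that $b(\theta)>0$ is a genuine implicit assumption the paper leaves unstated; in your joint-limit remark, note that the likelihood under $\theta$ need not converge to $1/|\mathcal A^{H2\star}(s;\theta)|$, but it is bounded below by $e^{-o(\beta^H)}$ while the alternatives are $O(e^{-\beta^H\Delta/2})$, so the posterior still tends to $1$.
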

\begin{proof}
By \Cref{def:ped_leverage}, $a^H\in\mathcal A^{H2\star}(s;\theta)$ and $a^H\notin\mathcal A^{H2\star}(s;\theta')$ for all $\theta'\neq\theta$, so
\[
\pi^{H2}(a^H\mid s,b;\theta)
\to
\frac{1}{|\mathcal A^{H2\star}(s;\theta)|}
>0,
\qquad
\pi^{H2}(a^H\mid s,b;\theta')
\to 0
\quad
\forall\,\theta'\neq\theta,
\]
as $\beta^H\to\infty$. Substituting these limiting likelihoods into
the R3 Bayes update and normalizing gives \eqref{eq:prop1}.
\qed
\end{proof}

\begin{corollary}[Pedagogic leverage breaks the IOC ceiling]
\label{cor:breaks_ioc_ceiling}
If Lemma~\ref{lem:ioc_ceiling} holds for $\theta^\star,\theta'$ and action $a^\dagger\in \mathcal L_{s,b}^{H2}(\theta^\star)$, then
\begin{equation*}
b^{R1+}_\infty(\theta^\star\mid a^\dagger)
=\frac{m\,b(\theta^\star)}{m \,b(\theta^\star)+k\,b(\theta')}
\quad \text{while} \quad b^{R3+}_\infty(\theta^\star\mid a^\dagger)=1.
\end{equation*}
\end{corollary}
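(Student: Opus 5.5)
The corollary follows by applying the two earlier results to the same observed action $a^\dagger$ and placing their conclusions side by side. No new limit needs to be computed.

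For the first equality, we invoke \Cref{lem:ioc_ceiling} directly. Its hypotheses are assumed to hold: $\Theta=\{\theta^\star,\theta'\}$, $a^\dagger\in\mathcal{A}^{H0\star}(s;\theta^\star)\subseteq\mathcal{A}^{H0\star}(s;\theta')$, and the cardinalities are $k\le m$. The lemma therefore gives
\[
b^{R1+}_\infty(\theta^\star\mid a^\dagger)=\frac{m\,b(\theta^\star)}{m\,b(\theta^\star)+k\,b(\theta')}.
\]
R1's update uses only the H0 likelihood, which depends on neither $b$ nor $\beta^R$. So nothing about the H2 hypothesis interferes with this computation.

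For the second equality, we invoke \Cref{prop:leverage_full_alignment} with $\theta=\theta^\star$ and $a^H=a^\dagger$. It applies because $a^\dagger\in\mathcal L_{s,b}^{H2}(\theta^\star)$ by assumption, and $\beta^R>0$ is arbitrary. The proposition gives $b^{R3+}_\infty(\theta^\star\mid a^\dagger)=1$.

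The only point needing care is consistency of the two limits. Both are taken as $\beta^H\to\infty$ from the same conditions $(s,b)$, and the leverage map is itself defined with $Q^{H2}$ computed in the rational-human limit. We should confirm that this is the same limit in which \Cref{prop:leverage_full_alignment} was proved, so that the hypothesis $a^\dagger\in\mathcal L^{H2}_{s,b}(\theta^\star)$ is exactly the one the proposition uses.

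Finally, we observe the contrast the corollary is meant to convey. Suppose both $b(\theta^\star)<1$ and $b(\theta')>0$. Then, since $k\ge1$, the first expression is strictly less than $1$, as \Cref{lem:ioc_ceiling} already notes. Hence R3 strictly exceeds the \gls{ioc} ceiling. There is no real obstacle in this argument; it is bookkeeping on the two cited results.
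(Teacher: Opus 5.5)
Your proposal is correct and matches the paper, which gives no separate argument for this corollary: it is obtained exactly by applying \Cref{lem:ioc_ceiling} and \Cref{prop:leverage_full_alignment} to the same action $a^\dagger$ and reading off the two limits side by side. Your strict-contrast remark is right. Note also that the R3 limit implicitly requires $b(\theta^\star)>0$; this is an assumption inherited from \Cref{prop:leverage_full_alignment} rather than a gap in your argument.
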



\paragraph{Running example:}
Return to the two-goal Tetris example with $\theta\in\{\Pgoal,\Lgoal\}$ and $a^H,a^R\in\{\side,\up\}$. We once again analyze the robot's inference from reference state $s = \PartialUp$, after the first two blocks have been placed. We use an additive team reward: each correct placement gives $+1$, and each incorrect placement gives $-1$, so the joint reward takes values in $\{-2,0,+2\}$ depending on whether 0, 1, or 2 blocks are placed correctly. In the one-step setting ($\gamma=0$), $Q^\text{O}(s,a^H,a^R;\theta)=R_\theta(a^H,a^R)$ is the immediate joint reward, given by
\[
R_{\Pgoal}=
\begin{bmatrix}
2 & \quad 0\\
0 & \quad -2
\end{bmatrix},
\qquad
R_{\Lgoal}=
\begin{bmatrix}
0 & \quad 2\\
2 & \quad 0
\end{bmatrix},
\]
where rows denote $a^H\in\{\side,\up\}$ and columns denote $a^R\in\{\side,\up\}$.

Fix any prior $b(\Pgoal)\in(0,1)$, and let the observed human action be $a^H=\side$, which results in the IOC inference ceiling~\eqref{eq:ioc_ceiling_tetris}.
Direct evaluation of $Q^{H2}$ 
yields the $\arg \max$ action sets $\mathcal A^{H2\star}(s;\Pgoal)=\{\side\}$ and $\mathcal A^{H2\star}(s;\Lgoal)=\{\up\}$ as $\beta^H \to \infty$ for all $\beta^R > 0$.
Therefore, by \Cref{def:ped_leverage}, $a^H = \side$ acquires \textit{pedagogic leverage} toward $\Pgoal$: $\mathcal L_{s,b}^{H2}(\Pgoal)=\{\side\}$. Likewise, $\mathcal L_{s,b}^{H2}(\Lgoal)=\{\up\}$.
Hence, by
\Cref{cor:breaks_ioc_ceiling},
\[
b^{R3+}_\infty(\Pgoal\mid\side)=1.
\]
Crucially, even in the prior regime $b(\Pgoal)<\tfrac{1}{3}$, the pragmatic (R3) robot's optimal response will place a block on the side ($a^R=\side$) as $\beta^H \to \infty$, completing $\Pgoal$. R3 is thus one-step \emph{corrigible} from any initial conditions. The emergent convention of $a^H=\side$ signaling $\Pgoal$ (and $a^H=\up$ signaling $\Lgoal$) can be viewed as a Schelling focal point.\footnote{Coordination problems with multiple equilibria often admit \textit{Schelling focal points}~\cite{schelling1957bargaining,schelling1980strategy}, or prominent solutions that players gravitate toward based on shared intuition or salience---for example, choosing ``12:00 PM'' as a default meeting time. Here, the solipsistic (H0) human serves as the implicit saliency model that the human and robot use to break possible ambiguity.} See \Cref{fig:overlay}.

\begin{figure}[t]
    \centering
    \includegraphics[width=0.85\linewidth]{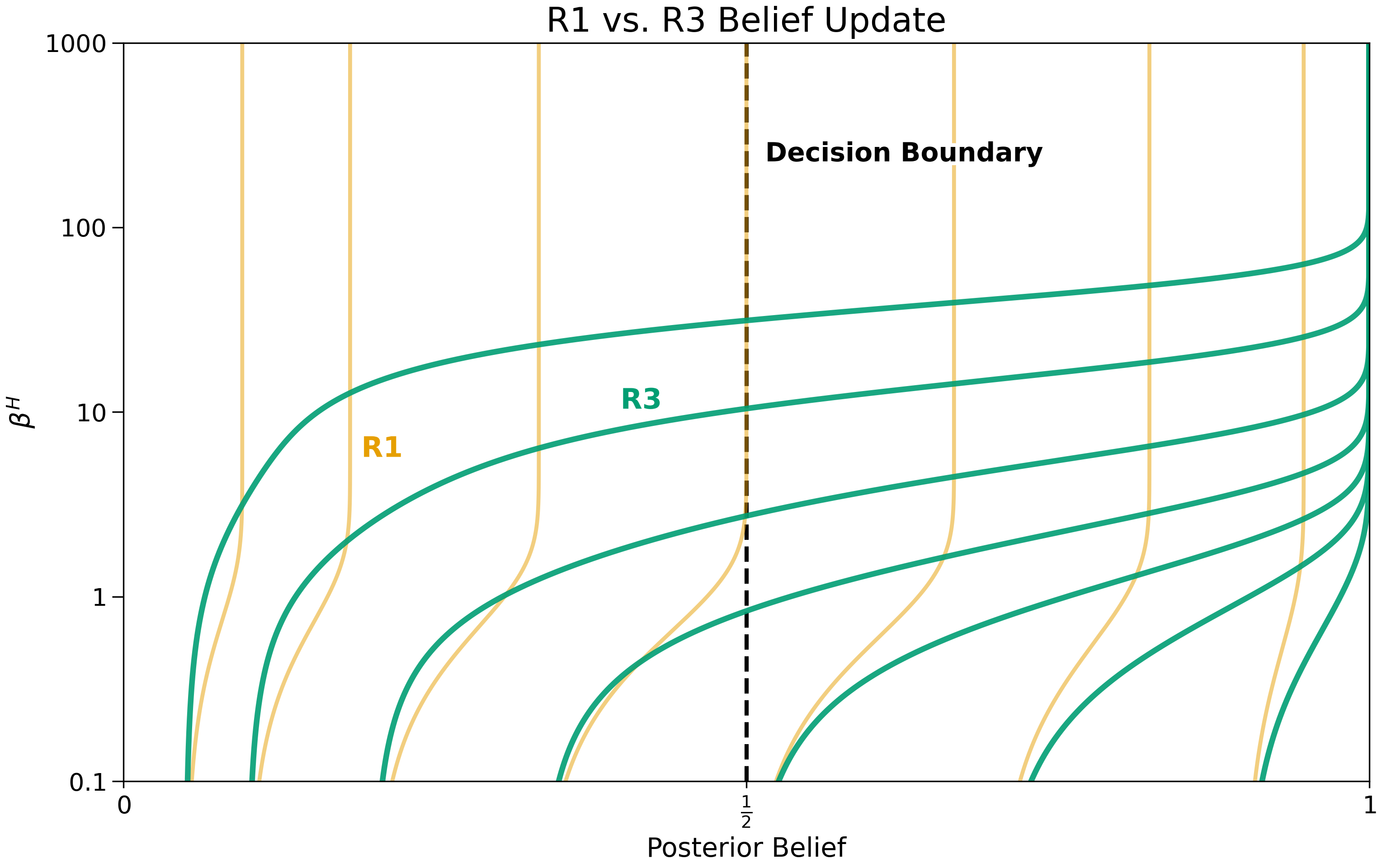}
    \caption{One-step posterior $b^+(\protect\Pgoal \mid  \protect\side)$ at $s = \protect\PartialUp$ as a function of human rationality $\beta^H$ for representative priors $b(\protect\Pgoal)\in(0,1)$. The \gls{ioc} update (R1, orange) saturates at the posterior ceiling of Lemma~\ref{lem:ioc_ceiling}, while the pragmatic update (R3, green; $\beta^R=1$) crosses the decision boundary $b=\tfrac{1}{2}$ (black dashed line) and approaches certainty in $\protect\Pgoal$ as $\beta^H$ increases.}
    \label{fig:overlay}
\end{figure}

\subsection{Vanishing Pedagogic Leverage and Scaling Behavior}
\label{sec:scaling}

We are primarily interested in the limiting behavior of rational agents R3--H2, which we note is obtained by having H2 reason about a \textit{noisily} rational R1. For finite $\beta^R > 0$, R1's softmax has full support, so \textit{all} robot responses---including follow-ups that are suboptimal under its posterior---contribute to H2's expected value. Consequently, even small posterior shifts (bounded by the IOC ceiling) can induce $Q^{H2}$ gradients that cause human actions to acquire pedagogic leverage (\Cref{def:ped_leverage}). This leverage might vanish if we simply plugged in R1's deterministic best response to each human action.

We study these effects in the two-action setting with $a, \tilde a\in\mathcal A^H(s)$ and target goal $\theta$, writing the log-odds of H2's softmax policy in terms of the pedagogic advantage (\Cref{def:ped_advantage}):
\begin{equation*}
\log\frac{\pi^{H2}(a\mid s,b;\theta)}{\pi^{H2}(\tilde a\mid s,b;\theta)}
=\beta^H A^{H2}_\theta(a, \tilde a; s,b).
\end{equation*}
Importantly, $A^{H2}_\theta$ depends on $\beta^R$ through R1's softmax policy, which is used to define $Q^{H2}$. If $A^{H2}_\theta$ vanishes in some regime of $\beta^R$, then driving $\pi^{H2}$ close to $1$ (for $A^{H2}_\theta > 0$) or to $0$ (for $A^{H2}_\theta < 0$) requires $\beta^H|A^{H2}_\theta| \to \infty$, i.e., $\beta^H$ must scale faster than the reciprocal rate.

We demonstrate this scaling behavior in our running example at finite $\beta^H, \beta^R$ values; see \Cref{fig:belief_heatmap_R3} and our analysis below. However, we emphasize that even a vanishing amount of leverage will be exploited with probability 1 by H2 in the limit of full rationality ($\beta^H \to \infty$).


\paragraph{Running example:}
At $s=\PartialUp$, let
$A_\theta:=A^{H2}_\theta(\side,\up;s,b)$,
$x:=\pi^{R1}(\side\mid s,b^{R1+},\side)$, and
$y:=\pi^{R1}(\up\mid s,b^{R1+},\up)$.
The reward matrices give
\begin{equation*}
\begin{aligned}
Q^{H2}(s,b,\side;\Pgoal)&=2x,
&
Q^{H2}(s,b,\up;\Pgoal)&=-2y,
\\
Q^{H2}(s,b,\side;\Lgoal)&=2(1-x),
&
Q^{H2}(s,b,\up;\Lgoal)&=2(1-y),
\end{aligned}
\end{equation*}
so $A_{\Pgoal}=2(x+y)$ and $-A_{\Lgoal}=2(x-y)$. $A_{\Pgoal}>0$ makes $a^H = \side$ advantageous under $\Pgoal$, while $-A_{\Lgoal}>0$ makes $a^H = \side$ disadvantageous under $\Lgoal$.

\textbf{Exponential regime
($b(\Pgoal)\in(0,\tfrac13)$, $\beta^R\to\infty$).} 
In the limit $\beta^H\to\infty$, R1's posterior after observing $a^H = \side$ converges to the IOC ceiling $p := b^{R1+}_\infty(\Pgoal\mid\side)$ (\ref{eq:ioc_ceiling_tetris}), and R1 assigns values $2p$ and $2(1-p)$ to follow-up actions $a^R=\side$ and $a^R=\up$, respectively. The value gap
favoring $a^R=\up$ over $a^R=\side$ is
\begin{equation*}
\kappa
:=
2(1-p)-2p
=
2-4p
=
\frac{2(1-3b(\Pgoal))}{1+b(\Pgoal)}
\in(0,2).
\end{equation*}
Therefore, $\lim_{\beta^H\to\infty}x=
(1+e^{\beta^R\kappa})^{-1}$. After observing $a^H=\up$, the value gap favoring $a^R=\side$ over $a^R=\up$ is 2, so $y=(1+e^{2\beta^R})^{-1}$.
These expressions yield $x=e^{-\beta^R\kappa}+o(e^{-\beta^R\kappa})$ and $y=e^{-2\beta^R}+o(e^{-2\beta^R})$.
Since $0<\kappa<2$, $e^{-2\beta^R} =
o(e^{-\beta^R\kappa})$,
so $x$ decays more slowly than $y$ and governs the leading-order scaling of both pedagogic advantages.

Thus, $A_{\Pgoal}
=
2e^{-\beta^R\kappa}
+
o(e^{-\beta^R\kappa})$ and $-A_{\Lgoal}
=
2e^{-\beta^R\kappa}
+
o(e^{-\beta^R\kappa})$.
Note that these quantities are not exactly equal; their difference, $A_{\Pgoal}-(-A_{\Lgoal})=4y$, is lower-order. Importantly, both advantages vanish at the same rate, so $a^H = \side$ unambiguously signals $\Pgoal$ when $\beta^H e^{-\beta^R\kappa}\to\infty$; a sufficient scaling
condition is
\begin{equation*}
\frac{1}{\beta^H}
=
o(e^{-\beta^R\kappa}),
\quad
b(\Pgoal)\in(0,\tfrac13);
\ \beta^R\to\infty.
\end{equation*}

\textbf{Benign regime ($b(\Pgoal)\in [\frac13, 1)$, $\beta^R\to\infty$).} No exponential growth of $\beta^H$ in $\beta^R$ is required.

\textbf{Hyperbolic regime ($\beta^R\to0$).}
Again let $p:=b^{R1+}_\infty(\Pgoal\mid\side)$ (\ref{eq:ioc_ceiling_tetris}), with fixed
$b(\Pgoal)\in(0,1)$. Expanding R1's softmax policy around $\beta^R=0$ gives
\[
x
=
\tfrac12
+
(p-\tfrac12)\beta^R
+
o(\beta^R),
\qquad
y
=
\tfrac12
-
\tfrac12\beta^R
+
o(\beta^R).
\]
Hence, $A_{\Pgoal}
=
2-2(1-p)\beta^R+o(\beta^R)$ and
$-A_{\Lgoal}
=
2p\beta^R+o(\beta^R)$. $a^H = \side$ therefore remains strongly preferred under $\Pgoal$ as $\beta^R \to 0$, since
$A_{\Pgoal}\to2$, but suppressing
$\pi^{H2}(\side\mid\Lgoal)$ requires $\beta^H\beta^R\to\infty$; a sufficient scaling condition is
\begin{equation*}
\frac{1}{\beta^H}=o(\beta^R),
\quad
\beta^R\to0.
\end{equation*}

\begin{figure}[!tb]
    \centering
    \includegraphics[width=0.8\linewidth]{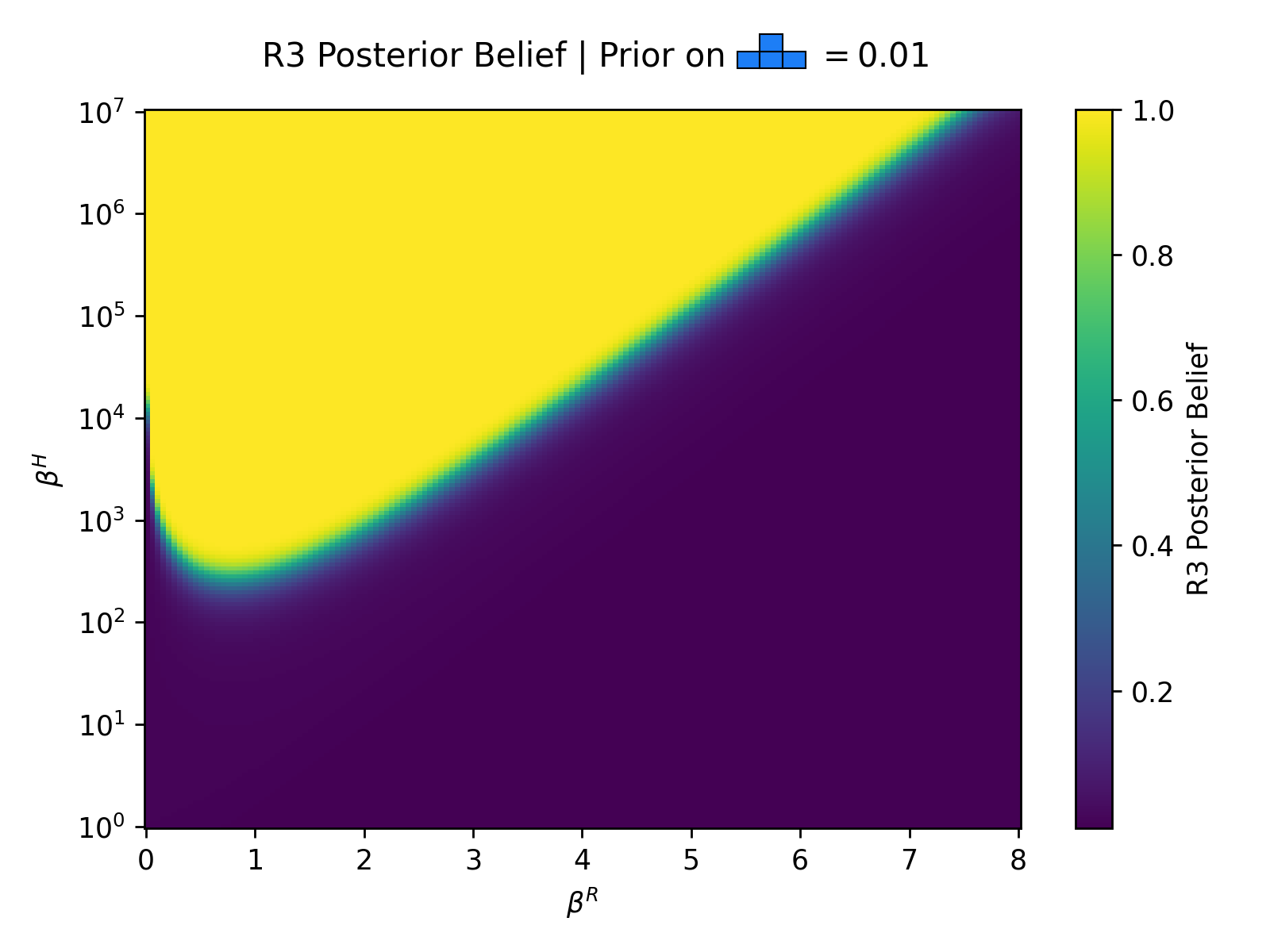}
    \caption{One-step pragmatic (R3) posterior
    $b^{R3+}(\protect\Pgoal\mid\protect\side)$ at
    $s=\protect\PartialUp$ as a function of $\beta^R, \beta^H$ with prior $b(\protect\Pgoal)=0.01$. The phase boundary from low to near-certain belief is consistent with the derived scaling regimes: hyperbolic growth as $\beta^R \to 0$ and exponential growth as $\beta^R \to \infty$.}
    \label{fig:belief_heatmap_R3}
\end{figure}

\subsection{Zero-Cost Pedagogy Enables Equilibrium-in-One}

We note that, in general, performing one round of pragmatic--pedagogic reasoning does not necessarily yield an optimal assistance game strategy pair. However, if the resulting R3--H2 policies attain the oracle value from the current state under every possible human goal, then they are necessarily optimal. Indeed, no team strategy can exceed $V^\mathrm{O}(s;\theta)$, which upper-bounds the optimal value of the assistance game, $V^{\mathcal G}(s, b;\theta)$.

Building on these insights, we identify a class of games in which the human and the robot arrive at an unambiguous pedagogic leverage map at \textit{zero cost} to the human: signaling any goal requires no sacrifice in value. In these games, pedagogy is ``free,'' allowing the human--robot team to achieve exactly the oracle value from state $s$ as if the robot already knew the human's true goal. We formalize this property below.

\begin{definition}[Zero-cost pedagogy]
\label{def:free-pedagogy}
A fully populated leverage map $\mathcal{L}$ achieves zero-cost pedagogy at state $s$
if, for each possible human goal $\theta\in\Theta$,
\begin{equation*}
    Q^\mathrm{O}(s, a^H, a^{R,\mathrm{O}};\theta) = V^\mathrm{O}(s; \theta) \quad \forall \, a^H \in \mathcal{L}(\theta),
\end{equation*}
where $a^{R,\mathrm{O}} \in \arg\max_{a^R\in\mathcal A^R(s)} Q^\mathrm{O}(s,a^H,a^R; \theta)$.
\end{definition}



The relevant class of assistance games immediately follows.

\begin{definition}[Action-separable assistance game] \label{def:action_separable}
    Let $\mathcal L^{H2}_{s,b}$ denote the pedagogic leverage map as $\beta^H,\beta^R\to\infty$. An assistance game $\mathcal G$ is action-separable at $(s,b)$ if $\mathcal L^{H2}_{s,b}$ is fully populated and achieves zero-cost pedagogy (\Cref{def:free-pedagogy}).
\end{definition}

We now prove that, for an action-separable assistance game, one round of pragmatic--pedagogic reasoning suffices to reach an optimal equilibrium solution. As we show in the next section, this solution is easily computable (\Cref{alg:ppbr_one_round}).

\begin{theorem}[Pragmatic--pedagogic equilibrium in one best response] \label{thm:equilibrium}
Fix a state $s$ and prior $b$ over $\Theta$. Suppose that the assistance game is action-separable at $(s, b)$ (\Cref{def:action_separable}). Then, the limiting rational ($\beta^H, \beta^R \to \infty$) R3--H2 policies ($\pi^{R3\star}, \pi^{H2\star}$) constitute an optimal pragmatic--pedagogic equilibrium of the assistance game at $(s, b)$.
\end{theorem}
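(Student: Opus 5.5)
The argument is an upper-bound sandwich. As noted before \Cref{def:free-pedagogy}, no team strategy can exceed the oracle value, so $V^{\mathcal G}(s,b;\theta)\le V^\mathrm{O}(s;\theta)$ for every $\theta\in\Theta$. It therefore suffices to show that the limiting R3--H2 pair achieves $V^\mathrm{O}(s;\theta)$ from $(s,b)$ for every possible true goal $\theta$. Such a pair is then optimal goal-by-goal and hence in expectation under $b$. It is also a mutual best response, since neither player can unilaterally exceed an upper bound that is already attained; this gives the equilibrium claim.

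The plan has four steps.

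\textbf{Step 1: the human's signal.} Fix the true goal $\theta$. Because the game is action-separable, $\mathcal L^{H2}_{s,b}(\theta)=\mathcal A^{H2\star}(s;\theta)$ is nonempty and disjoint from $\mathcal A^{H2\star}(s;\theta')$ for all $\theta'\neq\theta$. As $\beta^H\to\infty$, $\pi^{H2\star}(\cdot\mid s,b;\theta)$ is uniform on $\mathcal L^{H2}_{s,b}(\theta)$, so with probability one the human plays some $a^H\in\mathcal L^{H2}_{s,b}(\theta)$.

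\textbf{Step 2: the robot's inference.} By \Cref{prop:leverage_full_alignment}, R3's posterior satisfies $b^{R3+}_\infty(\theta\mid a^H)=1$. The robot's choice \eqref{eq:pi_r3} then reduces to $\arg\max_{a^R}Q^\mathrm{O}(s,a^H,a^R;\theta)$, so R3 plays an oracle response $a^{R,\mathrm{O}}$.

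\textbf{Step 3: the value at the current step.} By zero-cost pedagogy (\Cref{def:free-pedagogy}), $Q^\mathrm{O}(s,a^H,a^{R,\mathrm{O}};\theta)=V^\mathrm{O}(s;\theta)$ for every $a^H\in\mathcal L^{H2}_{s,b}(\theta)$. Averaging over the human's uniform choice in Step 1 still gives $V^\mathrm{O}(s;\theta)$.

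\textbf{Step 4: the continuation.} After this step the robot's belief is a point mass on $\theta$, and it stays degenerate under any further Bayesian update. From then on both players act in the perfect-information subgame and follow the oracle policies. By the Bellman relation \eqref{eq:q_team}, the continuation value is then exactly $\gamma\,\E[V^\mathrm{O}(s';\theta)]$, and the total is $V^\mathrm{O}(s;\theta)$.

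To get equilibrium rather than just a good strategy pair, one more point must be settled. R3 is a best response to H2 by construction. H2 is a best response to R3 because deviating cannot beat $V^\mathrm{O}$. H2 was defined against R1 rather than R3, but this does not matter: its choice already attains the bound, so it is also a best response to R3. No further levels are needed, since H4 would face the same upper bound.

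I expect the main obstacle to be the interchange of limits. \Cref{prop:leverage_full_alignment} fixes $\beta^R$ and sends $\beta^H\to\infty$, whereas action-separability defines $\mathcal L^{H2}_{s,b}$ with both parameters going to infinity. As the scaling analysis of \Cref{sec:scaling} shows, the pedagogic advantage can vanish exponentially in $\beta^R$. I would handle this by taking the limits jointly along a path on which $\beta^H|A^{H2}_\theta|\to\infty$, so that the leverage sets, and hence the H2 likelihood ratios, match the limiting map. Along such a path \Cref{prop:leverage_full_alignment}'s argument goes through unchanged: the likelihoods for $\theta'\neq\theta$ tend to $0$ while the likelihood for $\theta$ stays bounded below. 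Once the posterior is concentrated, R3's argmax is continuous in it for a strict oracle maximizer. If the oracle response is not unique, any maximizer attains the same value, so the conclusion is unaffected.
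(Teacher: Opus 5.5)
Your proposal is correct and follows essentially the same route as the paper's proof. You fix the true goal, use \Cref{prop:leverage_full_alignment} to collapse R3's posterior on a leveraging action, invoke zero-cost pedagogy to attain $V^\mathrm{O}(s;\theta)$ with oracle continuation, and conclude optimality and mutual best response from the oracle upper bound in a common-payoff game; your explicit handling of the $\beta^H,\beta^R$ limit interchange and of the degenerate belief persisting after the first step adds care the paper's proof leaves implicit.
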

\begin{proof}
Fix any true goal $\theta^\dagger\in\Theta$. Since the game is action-separable, the pedagogic leverage map is fully populated, so $\mathcal L^{H2}_{s,b}(\theta^\dagger)$ is nonempty; fix any human action $a^\dagger\in\mathcal L^{H2}_{s,b}(\theta^\dagger)$. By \Cref{prop:leverage_full_alignment}, observing $a^\dagger$ concentrates R3's posterior on $\theta^\dagger$ as $\beta^H \to \infty$. Because $\mathcal L^{H2}_{s,b}$ achieves zero-cost pedagogy (\Cref{def:free-pedagogy}), $a^\dagger$ and R3's limiting rational response jointly attain the oracle value from state $s$:
\begin{equation*}
\expect_{a^R \sim \pi^{R3\star}(\cdot \mid s,b^{R3+}_\infty,a^{\dagger})}\left[Q^\mathrm{O}(s,a^\dagger,a^R;\theta^\dagger)\right] = \max_{a^R\in\mathcal A^R(s)}\left[Q^\mathrm{O}(s,a^\dagger,a^R;\theta^\dagger)\right] = V^\mathrm{O}(s;\theta^\dagger).
\end{equation*}
Since H2's limiting rational policy is supported exclusively on $\mathcal L^{H2}_{s,b}(\theta^\dagger)$, averaging over human actions $a^\dagger \sim \pi^{H2\star}(\cdot \mid s,b;\theta^\dagger)$ also yields $V^\mathrm{O}(s;\theta^\dagger)$.

We emphasize that, upon observing any leveraging human action, R3's uncertainty \textit{really does} collapse in a single time step, after which the strategies for the rest of the time horizon correspond to the oracle team policies and realize $V^\mathrm{O}(s;\theta^\dagger)$ in full. Since $\theta^\dagger$ was arbitrary, the limiting R3--H2 policies attain the oracle value under every possible goal. No feasible strategy pair can exceed the fully informed oracle value under any goal, so $(\pi^{R3\star},\pi^{H2\star})$ is optimal.

Furthermore, assistance games are common-payoff games, so any unilateral policy deviation yields another feasible strategy pair evaluated under the same team objective. Holding $\pi^{R3\star}$ fixed, no human deviation can yield value greater than the globally optimal value already attained by
$(\pi^{R3\star},\pi^{H2\star})$. Hence,
$\pi^{H2\star}$ is a best response to $\pi^{R3\star}$. The same
argument applies to any unilateral robot deviation, so
$\pi^{R3\star}$ is a best response to $\pi^{H2\star}$.

Therefore, the limiting rational ($\beta^H, \beta^R \to \infty$) R3--H2 policies constitute an optimal pragmatic--pedagogic equilibrium of the assistance game at $(s,b)$.
\qed
\end{proof}

\paragraph{Running example:} From state $s=\PartialUp$ and any prior $b(\Pgoal)\in(0,1)$, recall that the pedagogic leverage map $\mathcal L^{H2}_{s,b}$ is fully populated as $\beta^H, \beta^R \to \infty$: $\mathcal L^{H2}_{s,b}(\Pgoal)=\{\side\}$ and $\mathcal L^{H2}_{s,b}(\Lgoal)=\{\up\}$. By \Cref{prop:leverage_full_alignment}, $b^{R3+}_\infty(\Pgoal \mid \side) = 1$ and $b^{R3+}_\infty(\Lgoal \mid \up) = 1$. Evaluating R3's limiting rational policy gives
\begin{align*}
\E_{a^R\sim\pi^{R3\star}(\cdot\mid s,b^{R3+}_\infty,\side)}
\!\left[Q^\mathrm{O}(s,\side,a^R;\Pgoal)\right]
&= 2 = V^\mathrm{O}(s; \Pgoal), \\
\E_{a^R\sim\pi^{R3\star}(\cdot\mid s,b^{R3+}_\infty,\up)}
\!\left[Q^\mathrm{O}(s,\up,a^R;\Lgoal)\right]
&= 2 = V^\mathrm{O}(s; \Lgoal).
\end{align*}
These calculations confirm that $\mathcal L^{H2}_{s,b}$ achieves zero-cost pedagogy (\Cref{def:free-pedagogy}). Consequently, the Tetris assistance game is action-separable at $(s,b)$ (\Cref{def:action_separable}), so the limiting rational ($\beta^H, \beta^R \to \infty$) R3--H2 policies constitute an optimal pragmatic--pedagogic equilibrium (\Cref{thm:equilibrium}).

\section{Algorithm}

Pragmatic--Pedagogic Best Response (PPBR) provides a simple procedure for computing an optimal equilibrium solution to an action-separable assistance game (\Cref{def:action_separable}). \Cref{alg:ppbr_one_round} checks whether the assistance game is action-separable at $(s,b)$ and, if so, returns the pedagogic leverage map $\mathcal{L}$ (\Cref{def:ped_leverage}), assigning to each goal the corresponding set of leveraging human actions.

Since the sets $\{\mathcal{L}(\theta)\}_{\theta\in\Theta}$ are disjoint, each leveraging action identifies a unique goal, inducing an inverse leverage map $\mathcal{L}^{-1}: \, \mathcal{A}^\mathcal{L} \to \Theta$, where $\mathcal{A}^\mathcal{L}$ is the union of all leverage sets $\mathcal{L}(\theta)$.
The limiting H2 policy, R3 posterior belief, and R3 policy are then readily obtained as
\begin{align*}
\pi^{H2\star}(a^H \mid s, b; \theta)
&= \frac{\mathbbm{1}\!\big[a^H \in \mathcal{L}(\theta)\big]}{|\mathcal{L}(\theta)|}, \\
b^{R3+}_\infty(\theta \mid a^H)
&= \mathbbm{1}\!\big[\theta = \mathcal{L}^{-1}(a^H)\big], \\
\pi^{R3\star}(a^R \mid s, b^{R3+}_\infty, a^H)
&= \pi^{R, \mathrm{O}}\!\big(a^R \mid s, a^H;\, \mathcal{L}^{-1}(a^H)\big),
\end{align*}
where $\pi^{R, \mathrm{O}}$ is the oracle robot policy, that is,
\begin{equation*}
\pi^{R, \mathrm{O}}(a^R \mid s, a^H; \theta)
\propto \mathbbm{1}\!\left[a^R \in \arg\max_{a \in \mathcal{A}^R(s)} Q^\mathrm{O}(s, a^H, a; \theta)\right].
\end{equation*}

The runtime of \Cref{alg:ppbr_one_round} is linear in the size of the goal set and action sets: $O(|\Theta||\mathcal{A}^H||\mathcal{A}^R|)$, which is as efficient as solving the oracle MDP. While our procedure directly computes the limiting behavior as $\beta^H, \beta^R \to \infty$ in our Tetris running example, plugging in finite values of $\beta^H$ (with $\beta^R = 1$) generates the curves depicted in \Cref{fig:prior-to-posterior}.

\begin{algorithm}[t]
\caption{Pragmatic--Pedagogic Best Response}
\label{alg:ppbr_one_round}
\KwIn{state $s$, prior $b$, finite $\beta^R > 0$ (hyperparameter)}
\KwOut{action-separable (Boolean), pedagogic leverage map $\mathcal{L}$ (\Cref{def:ped_leverage})}
Compute the limiting rational solipsistic human policy $\pi^{H0\star}$ at $(s,b)$ as in~\eqref{eq:pi_h0}\;
\ForEach{candidate human action $a^H$}{
    Compute the limiting literal posterior $b_\infty^{R1+}(\cdot \mid a^H)$\;
    Compute the noisy robot response policy $\pi^{R1}$ as in~\eqref{eq:pi_r1}\;
}
Compute the induced pedagogic human values $Q^{H2}$ from $\pi^{R1}$\;
\ForEach{goal $\theta$ 
    $\in$
    $\Theta$}{
    $\mathcal{L}(\theta) \gets \arg\max_{a^H \in \mathcal{A}^H(s)} Q^{H2}(s, b, a^H; \theta)$\;
    \If{$\exists \, a^H \in \mathcal{L}(\theta)$ already assigned to another goal}{
        \Return (False, $\emptyset$)\;
    }
    \If{$\exists\,a^H\in\mathcal L(\theta)$ such that
    $\displaystyle\max_{a^R\in\mathcal A^R(s)}
    Q^\mathrm{O}(s,a^H,a^R;\theta)
    < V^\mathrm{O}(s;\theta)$}{
        \Return (False, $\emptyset$)\;
    }
}
\Return (True, $\mathcal{L}$)\;
\end{algorithm}

\begin{figure}[!tb]
  \centering
  \includegraphics[width=0.62\textwidth]{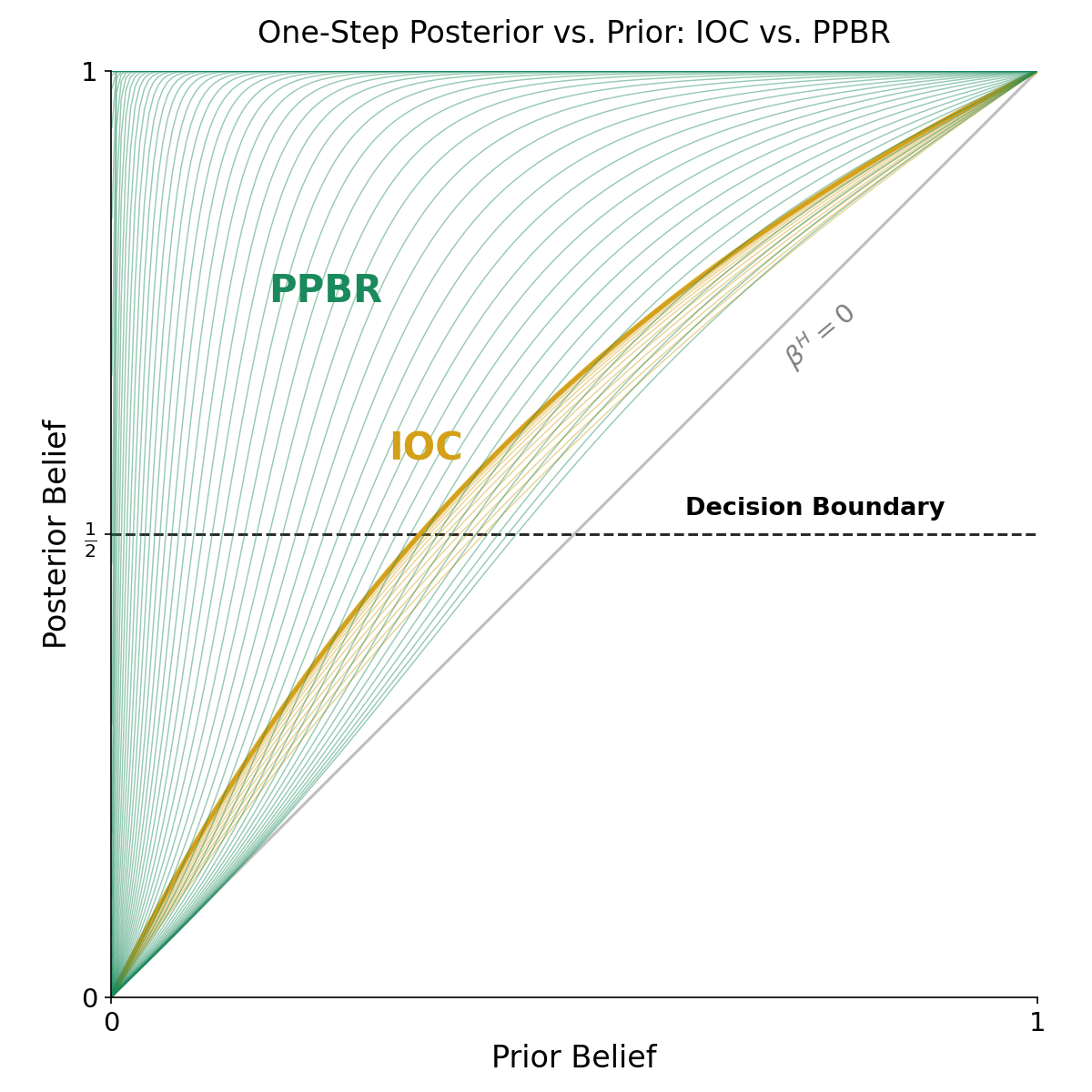}
  \caption{One-step posterior $b^+(\protect\Pgoal \mid  \protect\side)$ vs.\ prior $b(\protect\Pgoal)$ at $s = \protect\PartialUp$ with $\beta^R=1$. IOC (R1, orange) saturates at the ceiling $b^{R1+}_\infty = 2b/(1+b)$ as $\beta^H$ increases. PPBR (R3, green) instead breaks the ceiling and approaches $b^{R3+}_\infty = 1$.}
  \label{fig:prior-to-posterior}
\end{figure}

\section{Limitations and Future Work}

A direct extension is \emph{split leverage}, in which a single action is H2-optimal for more than one goal, giving only \textit{partial} disambiguation. Analyzing the resulting class of assistance games and lifting \Cref{alg:ppbr_one_round} to a multi-step procedure is beyond the scope of this paper. More broadly, we have shown that nested reasoning with Boltzmann-rational models naturally breaks the ambiguity between otherwise task-equivalent actions. We note that explicit level-$k$ iteration may not be the only algorithmic route to the pragmatic--pedagogic solution; investigating other approaches is left to future work.

\section{Conclusion}

In this paper, we identified a class of assistance games in which the human and the robot arrive at an unambiguous correspondence between goals and actions at \textit{zero cost} to the human, who can convey and achieve any goal without sacrificing task value. For these action-separable games, pragmatic--pedagogic reasoning resolves goal uncertainty in a single time step, breaking the \gls{ioc} inference ceiling that leads to incorrigible robot behavior. We proved that one best-response round starting from a mainstream \gls{ioc} solution suffices to reach an optimal equilibrium of the full-horizon game. Pragmatic--Pedagogic Best Response computes this equilibrium as efficiently as solving the oracle MDP, yielding a corrigible robot assistant that maximally empowers the human user. Ultimately, robust alignment hinges on how robotic and AI systems represent and interpret the humans with whom they interact.

\begin{credits}
\subsubsection{\ackname} We extend a special thank you to Donggeon Oh and Tom Silver for insightful discussions and feedback.

\subsubsection{\discintname}
The authors have no competing interests to declare that are relevant to the content of this article.
\end{credits}
%
%

\printbibliography
\end{document}